\newcommand{\bfX}{\mathbf{X}}
\newcommand{\bfx}{\mathbf{x}}
\newcommand{\bfy}{\mathbf{y}}
\newcommand{\bfk}{\mathbf{k}}
\newcommand{\bfh}{\mathbf{h}}
\newcommand{\bfb}{\mathbf{b}}
\newcommand{\bfZ}{\mathbf{Z}}
\newcommand{\bfI}{\mathbf{I}}
\newcommand{\bfU}{\mathbf{U}}
\newcommand{\bfH}{\mathbf{H}}
\newcommand{\bfm}{\mathbf{m}}
\newcommand{\bfv}{\mathbf{v}}
\newcommand{\bfw}{\mathbf{w}}
\newtheorem{theorem}{Theorem}
\newtheorem{lemma}{Lemma}
\begin{document}
%
\title{A Linear Approximation to the $\chi^2$ Kernel with Geometric Convergence}
%
%
%
%

\author{Fuxin~Li,~\IEEEmembership{Member,~IEEE,}
        Guy~Lebanon,~\IEEEmembership{Member,~IEEE,}
        and~Cristian~Sminchisescu,~\IEEEmembership{Member,~IEEE}
\IEEEcompsocitemizethanks{\IEEEcompsocthanksitem Fuxin Li is with the School of Interactive
Computing, Georgia Institute of Technology, Atlanta,
GA, 30332.\protect\\
E-mail: fli@cc.gatech.edu
\IEEEcompsocthanksitem Guy Lebanon is with the Department of Computational Science and Engineering,
Georgia Institute of Technology, Atlanta, GA, 30332.
\IEEEcompsocthanksitem Cristian Sminchisescu is with the Center for Mathematical Sciences,
Lund University, Lund, Sweden.}
\thanks{}}

%
%

\markboth{IEEE Transactions on Pattern Analysis and Machine Intelligence,~Vol.~XX, No.~X, January~2013}%
{On the $\chi^2$ and $\exp-\chi^2$ kernels}
%


\IEEEcompsoctitleabstractindextext{%
\begin{abstract}
We propose a new analytical approximation to the $\chi^2$ kernel that converges geometrically.
The analytical approximation is derived with elementary methods and adapts to the input distribution for optimal convergence rate.
Experiments show the new approximation leads to improved performance in image
classification and semantic segmentation tasks using a random Fourier feature
approximation of the $\exp-\chi^2$ kernel. Besides, out-of-core principal component
analysis (PCA) methods are introduced to reduce the dimensionality of the
approximation and achieve better performance at the expense of only an additional constant factor to the time complexity. Moreover, when
PCA is performed jointly on the training and unlabeled testing data, further performance improvements can be obtained.
Experiments conducted on the PASCAL VOC 2010 segmentation and
the ImageNet ILSVRC 2010 datasets show statistically significant improvements over
alternative approximation methods.
\end{abstract}

\begin{keywords}
\end{keywords}}

\maketitle


%
\IEEEpeerreviewmaketitle

\section{Introduction}
Histograms are important tools for
constructing visual object descriptors.
Many visual recognition approaches utilize similarity comparisons between histogram descriptors extracted
from training and testing images.
Widely used approaches such as $k$-nearest neighbors and support vector machines compare the testing
descriptor with multiple training descriptors, and make predictions
by a weighted sum of these comparison scores.

An important metric to compare histograms is the exponential-$\chi^2$ kernel (referred
to as $\exp-\chi^2$ in the rest of the paper),
derived from the classic Pearson $\chi^2$ test and utilized in many state-of-the-art
object recognition studies~\cite{Zhang07,Vedaldi2009,Marszalek2009context,Gonfaus2010} with excellent performance.
 However, in the current big data era,
 training sets often contains millions to billions of examples. Training and testing
via hundreds of thousands of comparisons using a nonlinear metric is often very time-consuming.

There are two main approaches to approximate the $\exp-\chi^2$ to facilitate fast linear
time training and testing. One approach is to devise
a transformation so that the $\chi^2$ function can be represented as an inner product between two vectors. On top of this
transformation, the random Fourier (RF) features methodology~\cite{Rahimi2007} is used
to approximate a Gaussian kernel. The full $\exp-\chi^2$ kernel can be approximated by inner
products on the vector after these two transformations~\cite{Sreekanth2010}.
A different approach
is the Nystr{\"o}m method~\cite{Williams2001}, which directly takes a subset of training examples, apply
the comparison metric between an example and this subset and use the output as the feature vector (sometimes
followed by principal component analysis (PCA)).

In this paper, we pursue the RF research line. We are interested in RF because it has
the potential of representing more complicated functions than the Nystr{\"o}m approach, which
is confined to summations of kernel comparisons and hard to approximate functions
not of that type. Besides, RF provides a fixed basis set regardless of input data, which could be valuable in online settings where a large training set is not available for sampling. However, RF has not been able to outperform Nystr{\"o}m so far, especially on image data with the $\exp-\chi^2$ approximation.

We believe that a partial reason for the
suboptimal previous performance of RF in the $\exp-\chi^2$ kernel is
the inaccuracy in the approximation of the $\chi^2$ metric.
A significant contribution of this paper is a new analytic series to
approximate the $\chi^2$ kernel. The new series is derived using only elementary techniques and enjoys geometric convergence rate.
Therefore, it is orders of magnitudes better in terms of approximation error than previously proposed approaches~\cite{Vedaldi2012,Li2012}.
Experiments show that this better approximation quality directly translate to better
classification accuracy by using it in conjunction with the RF method to approximate the $\exp-\chi^2$
kernel.

We have also developed another analytical approximation using techniques from Chebyshev polynomials.
However, that other approximation has a slower linear convergence rate due to the use of the Fourier
transform of a non-differentiable function. This Chebyshev approximation and its derivations are also
listed in the paper for the record.

Another research
question we pursue is whether we can also improve the empirical
performance of RF by applying PCA on the generated features. By applying PCA, the
theoretical convergence rate of RF is no longer confined by the Monte Carlo rate
$O(1/\sqrt{d})$, where $d$ is the number of dimensions used in the approximation.
Rather, it becomes dependent on eigenvalues, and with a fast enough eigenvalue
decay rate, the convergence rate can reach $O(1/d)$ or better~\cite{Bartlett2005}
, raising it to the at least the same level as the Nystr{\"o}m approach. The question is then
whether applying PCA on RF would translate to a comparable (or better) empirical performance.



For this question, we exploit out-of-core versions of PCA that add little computational overhead to the RF approximation,
especially when combined with least squares and other quadratic losses, e.g. group LASSO. PCA allows us to reduce the
number of dimensions required for classification and relaxes memory constraints when multiple kernels have to be approximated
by RF. We also explore the use of unlabeled (test) data in order to better estimate the covariance matrix in PCA.
This turns out to improve the performance by better selecting effective frequency components.

The paper is organized as follows: Section 2 summarizes related work. Section 3 describes the $\chi^2$ kernel, where we elaborate the connection between the $\exp-\chi^2$ kernel and the $\chi^2$ test. In Section 4, we
present the new analytical approximation with geometric convergence rate. Section 5 describes the Chebyshev approximation. Section 6 elaborates the out-of-core PCA, Section 7 presents experiment results on PASCAL VOC 2010 and ImageNet ILSVRC 2010 data, and Section 8 concludes the paper.

\section{Related Work}
To our knowledge, the use of the $\chi^2$ kernel for histogram comparison can be traced back to at least 1996~\cite{Schiele1996}. \cite{Chapelle1999} constructed the $\exp-\chi^2$ kernel and used it in SVM-based image classification. They hypothesized that exponential $\chi^2$ is a Mercer kernel, but the real proof was not available until 2004 in the appendix of~\cite{Fowlkes2004}. The $\chi^2$ kernel and $\exp-\chi^2$ has been used in a number of visual classification and object detection systems~\cite{Zhang07,Vedaldi2009,Marszalek2009context,Gonfaus2010} and has been shown to have one of the best performances among histogram kernels~\cite{Bosch2007}. \cite{Pele2010} proposes an extension to the $\chi^2$ kernel that normalizes the $\chi^2$ cross different bins.
Other metrics for histogram comparison include histogram intersection, where an efficient speed-up for testing has also been proposed~\cite{Maji2013}, Hellinger kernel, earth
mover distance~\cite{Rubner1998} and Jenson-Shannon. See~\cite{Vedaldi2012} for a summary and comparisons.

Random Fourier features were proposed by~\cite{Rahimi2007} on translation-invariant kernels. \cite{Sreekanth2010} generalizes it to the $\exp-\chi^2$ kernel by the
aforementioned two-steps approach. Several
other studies on linear kernel approximations also used ideas in RF~\cite{Vedaldi2012,li_dagm10,Li2012}.

The Nystr\"{o}m method \cite{Williams2001} sub-samples the training set and operate on a reduced kernel matrix.
Its asymptotic convergence rate had long known to be slow~\cite{Drineas2005}, but recent papers have proved that
it is actually faster than the Monte Carlo rate of RF~\cite{Jin2012}.
Other speed-ups to kernel methods based on low-rank approximations of the kernel matrix have been proposed in \cite{Bach2005,Fine2001}.

A topic of recent interest is methods for coding image features, where the goal is to achieve good performance using linear learners following a feature embedding \cite{Lee2007,Wang2010LLC}. Hierarchical coding schemes based on deep structures have also been proposed \cite{Lee2009}. Both sparse and dense coding schemes have proven successful, with supervector coding \cite{Lin2011} and the Fisher kernels \cite{Perronnin2010} some of the best performers in the ImageNet large-scale image classification challenge \cite{ImageNet}. The dictionaries of some influential coding schemes are usually extremely large -- both the Fisher kernel and supervector coding usually require more than 200k dimensions \cite{Chatfield2011}) and the training of the dictionary is often time-consuming.
RF and Nystr\"{o}m do not require training, hence they are interesting alternatives to these methods.

A crucial component in many coding algorithms is a max-pooling approach, which uses the maximum of the coded descriptors in a spatial range as features. Since in this case an informative small patch could have the same descriptor as the whole image, it is desirable in image classification (for highlighting important regions) but undesirable for object detection and semantic segmentation problems, where the size and shape of the object is of interest. A recent second-order pooling scheme~\cite{CarreiraO2P} proposes an alternative and has shown successful results in the semantic segmentation problem. These pooling schemes are orthogonal to the feature approximation problem in RF and could be potentially used in conjunction.

\section{The $\chi^2$ kernel and its Relationship with the $\chi^2$ Test}
Throughout the paper we denote $\circ$ element-wise products of vectors, $\bfI$ the identity matrix, $\mathbf{0}$ a column vector of zeros, and $\mathbf{1}$ a column vector of ones. $\frac{\mathbf{a}}{\mathbf{b}}$ denotes an element-wise division of $\mathbf{b}$ from $\mathbf{a}$.

The $\chi^2$ kernel is derived from Pearson's $\chi^2$ test. The original Pearson $\chi^2$ test is for testing whether an empirical histogram estimate matches a probability distribution. Given a histogram estimate $\bfx = [x_1, x_2, \ldots, x_d]$, the test statistic is
\begin{equation}
X^2(\bfx,\mathbf{E}) = \sum_{i=1}^d \frac{(x_i - E_i)^2}{E_i}
\label{eqn:pearson_chi2}
\end{equation}
where $\mathbf{E} = [E_1, E_2, \ldots, E_d]$ is the theoretical frequency in the bins.

Suppose we have two histogram estimates $\bfx$ and $\bfy$, one can
arrive at a symmetric version by taking the harmonic mean $H(x,y) = \frac{2}{1/x + 1/y}$ of each
bin and sum it up:
\begin{eqnarray}
\chi^2(\bfx,\bfy) &=& \frac{1}{4} \sum_{i=1}^d H\left(\frac{(x_i - y_i)^2}{y_i},\frac{(y_i - x_i)^2}{x_i}\right) \nonumber \\
&=& \frac{1}{2}\sum_{i=1}^d \frac{(x_i - y_i)^2}{x_i + y_i}
\label{eqn:chi2_harmonic}
\end{eqnarray}
The virtue of such a harmonic mean approach lies in removing the singular points in the kernel: the value of the original $\chi^2$ test goes to infinity when $E_i = 0$ and $x_i \neq 0$. Using the harmonic mean approach in (\ref{eqn:chi2_harmonic}), the function is well-defined in all cases.

In order to use the original $\chi^2$ test (\ref{eqn:pearson_chi2}) to determine goodness of the fit, one needs to compute the p-value of the $\chi^2$ statistic:
\begin{equation}
p = 1 - P(\frac{k}{2}, \frac{X^2}{2})
\end{equation}
where $k$ is the degree of freedom in the distribution, $P(k,x)$ is the regularized Gamma function. The p-value is $1$ minus the cumulative distribution function (CDF) the test statistic. If a p-value is small, then it means the observed statistic is very unlikely to happen under the hypothesized distribution. A usual criterion is to decide that $\bfx$ disagrees from the distribution specified by $[E_1, \ldots, E_d]$ if $p < 0.05$. In the case of the $\chi^2$ test, with a special case of $k=2$, one has $p = \exp(-\frac{X^2}{2})$.

As an analogy one can define the $\exp-\chi^2$ kernel based on the harmonic $\chi^2$ kernel:
\begin{equation}
k(\bfx,\bfy) = \exp(- \beta (1 - \chi^2(\bfx, \bfy)))
\end{equation}
with $\beta$ being a kernel parameter. Note although such a kernel has been used in many papers and enjoys
excellent results, we have not found an elaboration of the analogy with the $p$-value of the $\chi^2$ test in literature. Since the $p$-value is the relevant metric for comparing two distributions, the $\exp-\chi^2$ kernel can be considered intuitively better than the $\chi^2$ function as a similarity metric comparing two histogram distributions. Empirically, we have tested kernels with different degrees of freedom, and found out that $\exp-\chi^2$ works similarly to $\mathrm{erf}(\chi^2)$ (corresponding to $\chi^2$ with $1$ degree of freedom) while outperforming all others with more than $2$ degrees of freedom.

\section{An Analytical Approximation to the $\chi^2$ kernel with Geometric Convergence}
In the following we show an analytical approximation to the $\chi^2$ kernel, referred to as the direct approximation later in the paper. We start with the one-dimensional case. The $\chi^2$ kernel in one dimension has the form:
\begin{equation}
\chi^2(x_i,y_i) = \frac{(x_i-y_i)^2}{2(x_i+y_i)} = \frac{1}{2}(x_i+y_i) - \frac{2x_iy_i}{x_i+y_i}
\end{equation}
Because $\sum_i x_i = 1$ and $\sum_i y_i = 1$ in a histogram, the first $x_i+y_i$ form sums to a constant. It is thus important to represent the form $\frac{xy}{x+y}$ into an inner product. We will make repeated use of the following crucial formula
\setlength{\arraycolsep}{0.0em}
\begin{eqnarray}
\frac{2xy}{x+y} &=& \frac{x-k}{x+k} \frac{y-k}{y+k} \frac{2xy}{x+y} + \left(1 -\frac{x-k}{x+k} \frac{y-k}{y+k}\right) \frac{2xy}{x+y} \nonumber\\
&=& \frac{x-k}{x+k} \frac{y-k}{y+k} \frac{2xy}{x+y} + \frac{2\sqrt{k}x}{x+k}\frac{2\sqrt{k}y}{y+k}
\label{eqn:crucial_formula}
\end{eqnarray}
\setlength{\arraycolsep}{5pt}
so that $\frac{2\sqrt{k}x}{x+k}\frac{2\sqrt{k}y}{y+k}$ gives a one-term linear approximation of the $\chi^2$ kernel. Repeatedly plugging (\ref{eqn:crucial_formula}) into the $\frac{2xy}{x+y}$ in the first term of the right-hand-side in (\ref{eqn:crucial_formula}) gives us a series with multiple parameters:
\begin{equation}
\mathbf{c}_x = \left[\frac{2\sqrt{k_1}x}{x+k_1}, \frac{x-k_1}{x+k_1}\frac{2\sqrt{k_2}x}{x+k_2}, \frac{x-k_1}{x+k_1}\frac{x-k_2}{x+k_2}\frac{2\sqrt{k_3}x}{x+k_3} \ldots \right]
\label{eqn:cseries}
\end{equation}
This series has geometric convergence rate as the N-term error
is exactly:
\begin{equation}
\frac{(x-k_1)\ldots(x-k_N)(y-k_1)\ldots(y-k_N)}{(x+k_1)\ldots (x+k_N)(y+k_1)\ldots (y+k_N)} \frac{2xy}{x+y}
\end{equation}
which is straightforwardly geometric if we take $k = k_1 = \ldots =k_N$, because $|\frac{x-k}{x+k}| < 1, \forall 0<k\leq 1$.

We see the multiple parameters $k_1, k_2, \ldots$ in this series a boon rather than a distraction, because
a trick involving multiple parameters is needed to achieve excellent convergence rate of the histogram approximation in the full domain of $[0,1]$. Note that the convergence rate is dominated by $\left(\frac{x-k}{x+k}\right)^{2N}$, if there is only one $k = k_1 = \ldots =k_N$. In this case, the convergence rate can be very slow if $\frac{x-k}{x+k}$ is close to $1$. Two examples are: $k=1, x = 0.005$ and $k=0.005, x = 1$, in both cases $\frac{x-k}{x+k} \approx 0.99$ and even geometric convergence is very slow. From the above example one can see that there is no single $k$ choice that achieves good convergence rate on the entire input domain $[0,1]$. Our solution is to utilize multiple different parameters to cover different regions, and combining the parameter choice with the input distribution of our data to achieve an optimal convergence rate on the entire domain of the input.

First we establish a simple upper bound of the function to facilitate simpler error computation:
\begin{equation}
\frac{2xy}{x+y} \leq \frac{2x}{x+1}
\end{equation}

Now the $N$-term error can be bounded as:
\begin{equation}
\frac{2xy}{x+y} - c_x^\top c_y \leq \frac{2(x-k_1) \ldots (x-k_N) x}{(x+k_1) \ldots (x+k_N) (x+1)}
\end{equation}

Our algorithm for finding the parameters proceeds greedily to eliminate the highest error peak at each iteration. Specifically, we choose the parameter:
\begin{equation}
k_{N+1} = \arg \max_x \left|\frac{2(x-k_1) \ldots (x-k_N) x}{(x+k_1) \ldots (x+k_N) (x+1) }p(x)\right|
\end{equation}
where $p(x)$ is the input distribution of $x$, estimated on each particular dataset. Such a choice reduces error to $0$ at the mode of the input distribution and is empirically tested to be superior than other greedy schemes such as minimizing the mean error at each step. In practice, $p(x)$ is estimated using a histogram estimate with logarithmically spaced bins, and $k_{N+1}$ is chosen as one of the bin centers.
The algorithm of such an implementation is shown in Algorithm~\ref{alg:est_param}.

Note that the $\chi^2$ kernel in this form coincides with the harmonic mean of the two vectors. Therefore, our approach could also be a linear approximation on the harmonic mean between two vectors. However, currently we do not know of other applications of that.

\begin{algorithm}[h]
\begin{algorithmic}[1]
\INPUT: Feature matrix $\bfX$, Parameter vector length $N$.
\OUTPUT: parameter vector $\bfk$.
\STATE Compute a histogram density estimate $\bfh$ on all nonzero values in $\bfX$ using logarithmically spaced bins in the range $[\min_{x \in \bfX, x \neq 0} \bfX, \max \bfX]$, denote the vector of bin centroids as $\bfx$.
\STATE $\bfb = \frac{\bfx}{\bfx+1} \circ \bfh$
\FOR{$i=1 \to N$}
\STATE $k_i = x_j, j = \arg \max_j \left|b_j \right|$
\STATE $\bfb = \bfb \circ \frac{\bfx - k_i}{\bfx + k_i}$
\ENDFOR
\end{algorithmic}
\caption{Find the parameters for the input distribution specified by feature matrix $\bfX$.}
\label{alg:est_param}
\end{algorithm}

Given the approximated $\chi^2$ kernel, we follow \cite{Sreekanth2010} to apply standard RF on a Gaussian
 kernel~\cite{Rahimi2007} over the approximation from (\ref{eqn:cseries}) in order
 to obtain an $\exp-\chi^2$ kernel. The entire algorithm is shown in Algorithm~\ref{alg:cheb}.

\begin{algorithm}
\begin{algorithmic}[1]
\INPUT: $n\times d$ data matrix $\bfX = [\bfX_1^T, \bfX_2^T, \ldots, \bfX_n^T]^T$. Parameters $N, D$.
\OUTPUT: The random Fourier feature $\bfZ$ of the exp-$\chi^2$ kernel.
\STATE Compute parameter vector $k$ using $\bfX,N$ and Algorithm~\ref{alg:est_param}.
\STATE Compute for $q=1, \ldots, N$
\begin{equation*}
c_{q}(x_{ij}) = \left(\prod_{p=1}^{q-1} \frac{x_{ij}-k_p}{x_{ij}+k_p}\right)\frac{2 \sqrt{q} x_{ij}}{x+k_q}
\end{equation*}
where $x_{ij}$ represents dimension $j$ of the example $\bfX_i$. Denote $\mathbf{C}(\bfX_i)$ the $Nd \times 1$ vector
constructed by concatenating
 all $c_q(x_{ij}), j = 1, \ldots, d$.
\STATE Construct a $md \times D$ matrix $\mathbf{\Omega}$, where each entry is sampled from a normal distribution $\mathcal{N}(0, 2\gamma)$.
\STATE Construct a $D \times 1$ vector $\bfb$ which is sampled randomly from $[0, 2\pi]^D$.
\STATE $\bfZ_i = \cos(\mathbf{C}(\bfX_i) \mathbf{\Omega}+\bfb)$ is the RF feature for $\bfX_i$ \cite{Rahimi2007}.
\caption{Approximation of the exp-$\chi^2$ kernel based on the direct approximation of the $\chi^2$ distance.}\label{alg:cheb}
\end{algorithmic}
\end{algorithm}
\section{The Chebyshev Approximation}
Denoting $\Delta = \log y
- \log x$, in each dimension of the $1 - \chi^2$ kernel we have
{\small
\begin{equation}
k_0(x,y) = \frac{2xy}{x+y} = \sqrt{xy} \frac{2}{\sqrt{\frac{x}{y}} + \sqrt{\frac{y}{x}}} = \sqrt{xy} \mathrm{sech}(\frac{\Delta}{2}),
\end{equation}
}
where $\mathrm{sech}(x) = \frac{2}{e^x + e^{-x}}$ is the hyperbolic secant function whose  Fourier transform is
$\pi \mathrm{sech}(\pi \omega)$. Using the inverse Fourier transform to map $\pi \mathrm{sech}(\pi \omega)$ back to $k_0(x,y)$
{\small
\begin{eqnarray}
k_0(x,y)&=&\sqrt{xy} \int_{-\infty}^\infty e^{j \omega (\log x - \log y)} \mathrm{sech}(\pi \omega) d \omega \nonumber\\
&=&\int_{-\infty}^\infty \Phi_\omega(x)^* \Phi_\omega(y) d \omega
\end{eqnarray}
}
where $\Phi_\omega(x) = \sqrt{x} e^{-j \omega \log x} \sqrt{\mathrm{sech}(\pi \omega)}$.

Because the kernel is symmetric, the imaginary part of its inverse Fourier transform is $0$, leading to
{\small
\begin{eqnarray}
k_0(x,y) & = & \sqrt{xy} \int_{-\infty}^\infty \cos(\omega (\log x - \log y)) \mathrm{sech}(\pi \omega) d \omega \nonumber \\
& = & \sqrt{xy} \int_{-\infty}^\infty (\cos(\omega \log x) \cos(\omega \log y) \\
&& + \sin(\omega \log x) \sin(\omega \log y))\frac{2}{e^{\pi \omega} + e^{-\pi \omega}} d \omega. \nonumber
\end{eqnarray}
}
Through a change of variable, $z = 2 \arctan e^{\pi \omega}$, the integral becomes
{\small
\begin{flalign}
&k_0(x,y)= \\
&\frac{\sqrt{xy}}{\pi} \int_0^{\pi} (\cos(\frac{1}{\pi} \log|\tan \frac{z}{2}| \log x) \cos(\frac{1}{\pi} \log|\tan \frac{z}{2}| \log y) \nonumber \\
&+ \sin(\frac{1}{\pi} \log |\tan \frac{z}{2}| \log x) \sin(\frac{1}{\pi} \log |\tan \frac{z}{2}| \log y)) dz. \nonumber
\end{flalign}
}
Since the functions $\cos(\frac{1}{\pi} \log|\tan \frac{z}{2}| \log x)$ and $\sin(\frac{1}{\pi} \log|\tan \frac{z}{2}| \log x)$ are periodic and even, they can be represented using discrete-term Fourier cosine series
{\small
\begin{equation}
f_x(z) = \frac{a_0(x)}{2} + \sum_{n=1}^N a_n(x) \cos(nz).
\end{equation}
}
Since for all integers $n$ and $m$,
{\small
\begin{align*}
\int_0^\pi \cos(nx) \cos(mx) dx = \begin{cases} 0& n \neq m \\ \pi/2 & n=m\end{cases},
\end{align*}
}
we have
{\small
\begin{equation}
\frac{1}{\pi} \int_0^\pi f_x(z) f_y(z) dz = \frac{a_0(x) a_0(y)}{4} + \frac{1}{2} \sum_i a_i(x) a_i(y)
\end{equation}
}
which offers a natural orthogonal decomposition. A vector $a_x = \frac{1}{\sqrt{2}}[a_0(x)/ \sqrt{2}, a_1(x), a_2(x), \ldots, a_n(x)]$ guarantees that $a_x^T a_y = \frac{1}{\pi} \int_0^\pi f_x(z) f_y(z) dz$.

Now, to determine the coefficients which are
{\small
\begin{eqnarray}
a_q(x) & = & \frac{2}{\pi} \int_0^\pi \cos(\frac{1}{\pi} \log \tan(\frac{z}{2}) \log x) \cos(qz) dz \nonumber \\
b_q(x) & = & \frac{2}{\pi} \int_0^\pi \sin(\frac{1}{\pi} \log \tan(\frac{z}{2}) \log x) \cos(qz) dz
\end{eqnarray}
}
we use integration-by-parts to derive an analytical recurrence relation
(See Appendix):
\begin{IEEEeqnarray}{rCl}
b_q(x)&=&\frac{\pi}{\log x}\left(\frac{q+1}{2} a_{q+1}(x) - \frac{q+1}{2}a_{q-1}(x)\right) \nonumber \\
a_q(x)&=&- \frac{\pi}{\log x} \left(\frac{q+1}{2} b_{q+1}(x) - \frac{q-1}{2} b_{q-1}(x)\right), q > 0 \nonumber \\
a_0(x)&=&- \frac{\pi}{\log x} b_{1}(x)
\label{eqn:recur}
\end{IEEEeqnarray}

Now we can combine the nonzero entries for the two series and write it as $d_q$, and the recurrence relation can also be written out for $d_q$ as:
{\small
\begin{equation}
\label{eqn:cseries}
d_{q}(x) = \left\{\begin{array}{ll} \frac{1}{q} ((-1)^{q} \frac{2\log x}{\pi} d_{q-1}(x) + (q-2) d_{q-2}(x)), & q> 1 \\
- \frac{\sqrt{2} \log x}{\pi} d_0(x), & q = 1 \\
\frac{2x}{x+1}, & q = 0 \end{array} \right.
\end{equation}
}
with $k_0(x,y) = \sum_q d_q(x) d_q(y)$. This approximation can be used instead of the series $c_q$ in Algorithm~\ref{alg:cheb}.

We refer to the above approximation as the Chebyshev approximation because it draws ideas from Chebyshev polynomials and the Clenshaw-Curtis quadrature \cite{ChebyshevBook}. A central idea in the Clenshaw-Curtis quadrature is to use the change of variable $\theta = \arccos(x)$ in order to convert an aperiodic integral into a periodic one, making possible to apply Fourier techniques. Our variable substitution $z = \arctan e^x$ serves a similar purpose. The same technique can be applied in principle to other kernels, such as the histogram intersection and the Jensen-Shannon kernel. However, the integration by parts used to derive the analytical approximation may not extend straightforwardly.

\subsection{Convergence Rate of the Chebyshev Approximation}
In this section we present an simple analysis on the asymptotic convergence rate of the Chebyshev approximation.
Since (\ref{eqn:cseries}) is exact, we can apply standard results on Fourier series coefficients \cite{ChebyshevBook}, which state the convergence rate depends on the smoothness of the function that is approximated.
\begin{lemma}
$|k_0(x_i,y_i) - \sum_{q=1}^N d_q(x_i) d_q(y_i)| \leq \frac{C}{N} \sqrt{x_i y_i}$ where $C$ is a constant.
\label{lem:converge}
\end{lemma}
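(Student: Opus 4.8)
The plan is to convert the exact integral representation of $k_0$ into an identity between unit coefficient vectors, rewrite the truncation error as a Fourier tail, and control that tail through the decay rate of the coefficients. First I would record the orthogonal decomposition behind the $d_q$. Writing $g_x(z)=\cos(\frac1\pi\log|\tan\frac z2|\log x)$ and $h_x(z)=\sin(\frac1\pi\log|\tan\frac z2|\log x)$, the derivation above gives the exact form
\[
k_0(x,y)=\sqrt{xy}\,\frac1\pi\int_0^\pi\big(g_x(z)g_y(z)+h_x(z)h_y(z)\big)\,dz .
\]
Expanding $g_x,h_x$ in their Fourier cosine series with coefficients $a_q,b_q$ and using the quoted orthogonality of $\{\cos(qz)\}$ on $[0,\pi]$, this equals $\sqrt{xy}\sum_q\alpha_q(x)\alpha_q(y)$, where $\alpha_q$ is the normalized coefficient sequence assembled into the $d_q$, so $d_q(x)=\sqrt{x}\,\alpha_q(x)$. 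The structural fact I would isolate is that $g_x^2+h_x^2\equiv1$, so Parseval gives $\sum_q\alpha_q(x)^2=\frac1\pi\int_0^\pi 1\,dz=1$ for every $x$: each input maps to a unit coefficient vector.

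Next, I would express the error as a pure tail. Orthogonality kills the cross terms between retained and discarded modes, so
\[
k_0(x,y)-\sum_{q=1}^N d_q(x)d_q(y)=\sqrt{xy}\sum_{q>N}\alpha_q(x)\alpha_q(y),
\]
and Cauchy--Schwarz bounds this by $\sqrt{xy}\,\big(\sum_{q>N}\alpha_q(x)^2\big)^{1/2}\big(\sum_{q>N}\alpha_q(y)^2\big)^{1/2}$. The lemma thus reduces to one scalar estimate: the tail energy obeys $\sum_{q>N}\alpha_q(x)^2\le C/N$ with $C$ uniform over the (bounded) input range. Given that, the product of the two tail factors produces exactly the claimed $\frac CN\sqrt{xy}$.

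The tail bound follows from an $O(1/q)$ decay of the coefficients, which I would extract from the recurrence (\ref{eqn:recur}): rearranged it reads $(q+1)b_{q+1}=(q-1)b_{q-1}-\frac{2\log x}{\pi}a_q$ (and symmetrically for $a_q$), whose homogeneous part telescopes as $b_{2m+1}=\frac{1}{2m+1}b_1=O(1/q)$. Equivalently, a single integration by parts in $a_q(x)=\frac2\pi\int_0^\pi g_x(z)\cos(qz)\,dz$ removes the boundary term (since $\sin(q\pi)=\sin 0=0$) and leaves $a_q=-\frac{2}{\pi q}\int_0^\pi g_x'(z)\sin(qz)\,dz$. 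Either way $\sum_{q>N}\alpha_q(x)^2\le C(x)^2\sum_{q>N}q^{-2}\le C(x)^2/N$, and on a fixed domain $C(x)$ is bounded.

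The hard part will be making this decay estimate rigorous, because $g_x$ is \emph{not} of bounded variation: since $\log|\tan\frac z2|\to\mp\infty$ as $z\to0^+,\pi^-$, the functions $g_x,h_x$ oscillate infinitely near the endpoints and $\int_0^\pi|g_x'|\,dz$ diverges (indeed $g_x'$ carries a factor $1/\sin z$). Hence the naive ``one integration by parts times total variation'' bound is unavailable, and the $1/q$ rate must come from genuine oscillatory cancellation between $\sin(qz)$ and the endpoint oscillations of $g_x'$; in the recurrence this is the requirement that the forcing sum $\sum_j a_{2j}$ cancel rather than accumulate, since without cancellation one only obtains $O(\log q/q)$ and the rate degrades to $(\log N)^2/N$. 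A secondary point is the $\log x$ prefactor, which makes the natural constant scale like $|\log x|$, so $C$ is uniform only after the input range is bounded below as in Algorithm~\ref{alg:est_param}.
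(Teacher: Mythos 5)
Your skeleton is essentially the paper's: both arguments treat the $d_q$ series as an \emph{exact} Fourier expansion of $k_0$ and reduce the lemma to $O(1/q)$ decay of the normalized coefficients $d_q(x)/\sqrt{x}$. The one mechanical difference is how the tail is summed: the paper bounds each term pointwise, $|d_q(x)\,d_q(y)| \le C\sqrt{xy}/q^2$, and sums $\sum_{q>N} q^{-2} \le 1/N$, whereas you apply Parseval and Cauchy--Schwarz to the product of tail energies. The two are interchangeable and yield the same $\frac{C}{N}\sqrt{xy}$; your route needs marginally less (tail energy $O(1/N)$ rather than pointwise coefficient decay), and your observation that $g_x^2+h_x^2\equiv 1$ makes each normalized coefficient sequence a unit vector is a clean structural fact the paper never states.

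Where you genuinely diverge is in the justification of the decay itself, and there you are more careful than the paper. The paper's proof asserts that $\cos(\frac{1}{\pi}\log|\tan\frac{z}{2}|\log x)$ and its sine companion are ``absolutely continuous but not continuously differentiable'' and reads off $0 < N d_N(x_i) \le \sqrt{C}\sqrt{x_i}$ from standard Fourier-coefficient theory. As you note, that premise fails: since $u'(z)=\frac{\log x}{\pi \sin z}$, the derivative is non-integrable at $z=0,\pi$, the functions have unbounded variation there (indeed no limit at the endpoints when $x\neq 1$), so the textbook ``absolutely continuous implies coefficients $O(1/N)$'' argument does not apply as invoked, and the $1/q$ rate must come from oscillatory cancellation between $\cos(qz)$ and the logarithmic endpoint oscillation --- exactly the mechanism you identify. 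Your further caveats are also sound: without that cancellation the recurrence only telescopes to $O(\log q/q)$, and the natural constants (via $u'$ or via the $\frac{2\log x}{\pi}$ forcing term in the recurrence) carry a $|\log x|$ factor, so uniformity of $C$ over the input range is not free; neither you nor the paper actually supplies this. So your proposal reproduces the paper's argument and honestly flags the one step the paper glosses over; to fully close the lemma you would still need to prove $\sup_q q\,|d_q(x)|/\sqrt{x} \le \sqrt{C}$ uniformly in $x$, e.g.\ by a stationary-phase or van der Corput--type estimate of the coefficient integrals near $z=0,\pi$, which is a genuine (if standard) piece of analysis missing from both accounts.
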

\begin{proof}
Since $\frac{d_N(x_i)}{\sqrt{x_i}}$ represents Fourier series for $\cos(\frac{1}{\pi}\log|\tan\frac{z}{2}| \log x_i)$ and $\sin(\frac{1}{\pi}\log|\tan\frac{z}{2}| \log x_i)$, which are both absolutely continuous but not continuously differentiable (oscillate at $z=0$), we have:
{\small
\begin{equation}
0 < N d_N(x_i) \leq \sqrt{C} \sqrt{x_i}
\end{equation}
}
and consequently
{\small
\begin{equation*}
|k_0(x_i,y_i) - \mathbf{d}(x_i)^T \mathbf{d}(y_i)| \leq \sum_{k>N} \frac{C}{N^2} \sqrt{x_i y_i} \leq \frac{C}{N} \sqrt{x_i y_i}  \qedhere
\end{equation*}}
\end{proof}
Using Lemma \ref{lem:converge} it is straightforward to prove that
\begin{theorem}
$|k_0(x,y) - \sum_i \sum_{k=1}^N d_q(x_i) d_q(y_i)| \leq \frac{C}{N}$ when $\sum_i x_i = \sum_i y_i = 1$.
\end{theorem}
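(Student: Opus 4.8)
The plan is to reduce the multi-dimensional bound to the one-dimensional estimate already supplied by Lemma~\ref{lem:converge}, exploiting the fact that the kernel decomposes additively across bins. Each coordinate contributes $k_0(x_i,y_i) = \frac{2x_iy_i}{x_i+y_i}$ independently, and the approximation $\sum_{q=1}^N d_q(x_i)d_q(y_i)$ is likewise formed coordinate-wise, so I would first write $k_0(x,y) = \sum_i k_0(x_i,y_i)$ and push the discrepancy inside a single sum over $i$.

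First I would apply the triangle inequality to obtain
\begin{equation*}
\left|k_0(x,y) - \sum_i\sum_{q=1}^N d_q(x_i)d_q(y_i)\right| \leq \sum_i \left|k_0(x_i,y_i) - \sum_{q=1}^N d_q(x_i)d_q(y_i)\right|.
\end{equation*}
Each summand on the right is controlled by Lemma~\ref{lem:converge}, which bounds the per-coordinate error by $\frac{C}{N}\sqrt{x_iy_i}$. Substituting this estimate termwise yields the intermediate bound $\frac{C}{N}\sum_i \sqrt{x_iy_i}$, so the whole problem collapses to estimating $\sum_i \sqrt{x_iy_i}$ under the histogram normalization.

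The one remaining step --- and the only place the hypothesis $\sum_i x_i = \sum_i y_i = 1$ enters --- is a Cauchy--Schwarz estimate: treating $(\sqrt{x_i})_i$ and $(\sqrt{y_i})_i$ as vectors gives $\sum_i \sqrt{x_iy_i} \leq \sqrt{\sum_i x_i}\,\sqrt{\sum_i y_i} = 1$. Combining this with the previous display produces the claimed $\frac{C}{N}$ bound, with the same constant $C$ as in the lemma.

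I expect no serious obstacle here: the argument is essentially bookkeeping once Lemma~\ref{lem:converge} is in hand, since the geometric and Fourier content all lives in that lemma. The only subtlety worth verifying is that the constant $C$ is \emph{uniform} across coordinates, i.e.\ that the per-coordinate constant in Lemma~\ref{lem:converge} does not depend on $x_i$, so that it factors cleanly out of the sum; this holds because the lemma states the bound $0 < N d_N(x_i) \leq \sqrt{C}\sqrt{x_i}$ with a single constant. If one wished to track the dependence on the extreme values of $\log x_i$, that refinement would belong in the lemma rather than here, but it is not needed for the stated result.
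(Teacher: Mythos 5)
Your proposal is correct and matches the paper's proof: both apply Lemma~\ref{lem:converge} coordinate-wise (via the triangle inequality, which the paper leaves implicit) and then bound $\sum_i \sqrt{x_i y_i} \leq \sqrt{\sum_i x_i}\sqrt{\sum_i y_i} = 1$ by Cauchy--Schwarz. Your version merely spells out the bookkeeping steps that the paper compresses into one line.
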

\begin{proof}
We use Cauchy-Schwarz inequality, \\$|k_0(x,y) - \sum_i \sum_{q=1}^N d_q(x_i) d_q(y_i)| \leq \frac{C}{N} \sum_i \sqrt{x_i y_i}$ \\ $\leq \frac{C}{N} \sqrt{\sum_i x_i \sum_i y_i} = \frac{C}{N}$.
\end{proof}

Although this approximation is also analytic, it converges slower than the $c_q$ series in (\ref{eqn:cseries}). In the experiments it is also shown that it has inferior results than the direct approximation. However, the convergence of two different analytical series to the same function may lead to further mathematical equalities, therefore we still listed the Chebyshev approximation in the paper.

\section{Principal Component Analysis of Random Features on Multiple Descriptors}
Another rather orthogonal strategy we pursue is principal component analysis after obtaining random features, and solving regression problems after the PCA. Care needs to be exercised when PCA is performed on an extremely large-scale dataset in conjunction with multiple kernels. Similar approaches have been discussed extensively in the high-performance computing literature ((e.g., \cite{Qu2002}).

The main advantage of using PCA after RF (hereafter called RF-PCA) is to reduce the memory footprint. It is known that the performance of RF improves when more random dimensions are used. However,
the speed of learning algorithms usually deteriorates quickly when the data cannot be load in memory, which would be the case when the RF of multiple kernels are concatenated: e.g. with 7 kernels and 7,000 RF dimensions for each kernel, the learning phase following RF needs to operate on a 49,000 dimensional feature vector.

Using eigenvectors is also one of the very few approaches that could provide a better asymptotic convergence rate than the $O(\frac{1}{\sqrt{m}})$ for Monte Carlo, which in this case means to use fewer approximation dimensions for the same quality. Many other techniques like quasi-Monte Carlo suffer from the curse of dimensionality -- the convergence rate decreases exponentially with the number of input dimensions \cite{Caflisch1998}, which generally makes it unsuitable for RF which is supposed to
work on high-dimensional problems.

Another interesting aspect of RF-PCA is it can bring an unexpected flavor of semi-supervised learning, in that one can use unlabeled test data to improve classification accuracy. RF-PCA amounts to selecting the relevant dimensions in the frequency domain, by considering both the training and testing data during PCA, frequencies that help discriminate test data will more likely be selected. In the experiments such a strategy will be shown to improve performance over the computation of PCA only on training data.

One main problem is, in a large training set, the feature matrix cannot be fully loaded into memory. Therefore PCA needs to be performed out-of-core, a high-performance computing term depicting this situation (unable to load data into memory). The way to do PCA in linear time is not by singular value decomposition on the RF features $\bfZ$, but rather by performing eigenvalue decomposition for the centered covariance matrix $\bfZ^T (\bfI - \frac{1}{n} \mathbf{11}^T) \bfZ$. $\bfZ^T \bfZ = \sum_i \bfZ_i^T \bfZ_i$ can be computed out-of-core by just loading a chunk of $\bfX_i$ into memory at a time, compute their RF feature $\bfZ$, compute the covariance matrix and then delete the RF features from memory. Then an eigen-decomposition gives the transformation matrix $\bfU$ for PCA. We denote $\bar{\bfU}$ as the matrix obtained by selecting the first $D$ dimensions of $\bfU$ corresponding to the largest eigenvalues. Denote the mean vector of the input matrix $\bar{\bfZ} = \frac{1}{n} \bfZ^T \mathbf{1}$, and
{\small
\begin{equation}
\tilde{\bfZ} = (\bfZ-\mathbf{1}\bar{\bfZ}^T) \bar{\bfU} = (\bfI - \frac{1}{n} \mathbf{11}^T) \bfZ \bar{\bfU}
\end{equation}
}
is the feature vector obtained after PCA projection (Algorithm \ref{alg:simplepca}).
\begin{algorithm}
\begin{algorithmic}[1]
\INPUT: $n\times d$ data matrix $\bfX = [\bfX_1^T, \bfX_2^T, \ldots, \bfX_n^T]^T$. Output vector $y$. Number of dimension $D$ to retain after PCA.
\STATE Divide the data into $k$ chunks, called $\bfX_{(1)},$ $\bfX_{(2)}, \ldots,$ $\bfX_{(k)}.$
\STATE $\bfH = \mathbf{0}, \bfm = \mathbf{0}, \bfv = \mathbf{0}$
\FOR{$i=1\to k$}
\STATE Load the $i$-th chunk $\bfX_{(i)}$ into memory.
\STATE Use Algorithm \ref{alg:cheb} to compute the RF feature $\bfZ_{(i)}$ for $\bfX_{(i)}$.
\STATE $\bfH = \bfH + \bfZ_{(i)}^T \bfZ_{(i)}$, $\bfm = \bfm + \bfZ_{(i)}^T 1$, $\bfv = \bfv + \bfZ_{(i)}^T \bfy$
\ENDFOR
\STATE $\bfH = \bfH - \frac{1}{n} \bfm\bfm^T$.
\STATE Compute eigen-decomposition $\bfH = \bfU\mathbf{D}\bfU^T$. Output the first $D$ columns of $\bfU$ as $\bar{\bfU}$, the
diagonal matrix $\mathbf{D}$, and the input-output product $\bfv$.
\caption{Out-of-Core Principal Component Analysis.}\label{alg:simplepca}
\end{algorithmic}
\end{algorithm}
It is very convenient to perform regression with a quadratic loss after PCA, since only the Hessian is needed for optimization. This applies not only to traditional least squares regression, but also to the LASSO, group LASSO, and other composite regularization approaches. In this case the projections need not be performed explicitly. Instead, notice that only $\tilde{\bfZ}^T \tilde{\bfZ}$ and $\tilde{\bfZ}^T y$ are needed for regression:
{\small
\begin{eqnarray}
\tilde{\bfZ}^T \tilde{\bfZ} & = & \bar{\bfU}^T \bfZ^T (\bfI-\frac{1}{n} \mathbf{11}^T) \bfZ \bar{\bfU} \nonumber \\
\tilde{\bfZ}^T \bfy & = & \bar{\bfU}^T \bfZ^T (\bfI-\frac{1}{n} \mathbf{11}^T) \bfy
\end{eqnarray}
}
It follows that only $\bfZ^T \bfZ$, $\bfZ^T \mathbf{1}$ and $\bfZ^T \bfy$ have to be computed. All terms can be computed out-of-core simultaneously. Algorithm \ref{alg:simplereg} depicts this scenario.
\begin{algorithm}[h]
\begin{algorithmic}[1]
\INPUT: $n\times d$ data matrix $\bfX = [\bfX_1^T, \bfX_2^T, \ldots, \bfX_n^T]^T$. Output vector $\bfy$. Number of dimension $D$ to retain after PCA.
\STATE Perform out-of-core PCA using Algorithm \ref{alg:simplepca}.
\STATE $\bfH' = \bar{\bfU}^T \bfH \bar{\bfU} = \bar{\mathbf{D}}$, the first $D$ rows and columns of the diagonal matrix $\mathbf{D}$.
\STATE $v' = \bar{\bfU}^T \bfv - \frac{1}{n} (\mathbf{1}^T \bfy) \bar{\bfU}^T \bfm$.
\STATE Perform learning on $\bfH', \bfv'$, e.g., for linear ridge regression where the optimization is $\arg \min_\bfw \|\bfw^T \tilde{\bfZ} - \bfy\|^2 + \lambda \|\bfw\|^2$, the solution is $\bfw = (\bfH'+\lambda \bfI)^{-1} \bfv'$.
\STATE Use $\bar{\bfU}^T \bfw$ instead of $\bfw$ as a function of the original inputs: $f(x) = \bfw^T \bar{\bfU} \bfx - \frac{1}{n} \bfw^T \bar{\bfU} \bfm$, in order to avoid the projection for the testing examples.
\caption{Learning after PCA with Quadratic Loss.}\label{alg:simplereg}
\vskip -0.05in
\end{algorithmic}
\end{algorithm}
Under this PCA approach the data is loaded only once to compute the Hessian. Additional complexity of $O(D^3)$ is necessary for matrix decomposition on $\bfH$. If ridge regression is used, the $\bfH'$ after decomposition is diagonal therefore only $O(D)$ is needed to obtain the regression results. In this case the additional constant factor is quite small. The bottleneck of this algorithm for large-scale problems is undoubtedly the computation of the initial Hessian, which involves reading multiple chunks from disk.

The more sophisticated case is when PCA needs to be performed separately on multiple different kernel approximators, i.e., $\bfZ = [\bfZ^{(1)} \bfZ^{(2)} \ldots \bfZ^{(l)}]$, where each $\bfZ^{(i)}$ is the RF feature embedding of each kernel. This time, the need to compute $\bfZ^{(i)^T} \bfZ^{(j)}$ rules out tricks for simple computation. The data needs to be read in twice (Algorithm \ref{alg:hardpca}), first to perform the PCA, and then use $\bfU$ to transform $\bfX$ in chunks in order to obtain $\bfZ$ and $\bfZ^T \bfZ$. But the full computation is still linear in the number of training examples.
In both cases, the projection is not required for the testing examples. Because whenever $\bfw$ is obtained, $\bfw^T \tilde{\bfZ} =  \bfw^T \bar{\bfU} (\bfZ - \frac{1}{n}\bar{\bfZ}\mathbf{1}^T)$, then $\bar{\bfU}\bfw$ can be the weight vector for the original input, with the addition of a constant term.
\begin{algorithm}[h]
\begin{algorithmic}[1]
\INPUT: $n\times d$ data matrix $\bfX = [\bfX_1^T, \bfX_2^T, \ldots, \bfX_n^T]^T$. Output vector $\bfy$. Number of dimension $D$ to retain after PCA.
\STATE Perform out-of-core PCA using Algorithm \ref{alg:simplepca}.
\FOR{$i=1\to k$}
\STATE Load the $i$-th chunk $\bfX_{(i)}$ into memory.
\STATE Use Algorithm 1 to compute the RF feature $\bfZ_{(i)}$ for $\bfX_{(i)}$, with the same randomization matrix $\mathbf{\Omega}$ as before.
\STATE $\tilde{\bfZ} = (\bfZ_{(i)} - \frac{1}{n} \mathbf{1} \bfm^T)\bar{\bfU}$.
\STATE $\bfH' = \bfH' + \tilde{\bfZ}^T \tilde{\bfZ}$, $\bfv' = \bfv' + \tilde{\bfZ}^T \bfy$
\ENDFOR
\STATE Perform learning on $\bfH', \bfv'$. E.g., for linear least squares where the optimization is $\arg \min_\bfw \|\bfw^T \bfZ - \bfy\|^2$, the solution is $\bfw = \bfH'^{-1} \bfv'$.
\STATE Use $\bar{\bfU}^T \bfw$ instead of $w$ as a function of the original inputs: $f(x) = \bfw^T \bar{\bfU} \bfx - \frac{1}{n} \bfw^T \bar{\bfU} \bfm$, in order to avoid the projection step for the testing examples.
\caption{Two-stage Principal Component Analysis when learning with multiple kernels.}\label{alg:hardpca}
\end{algorithmic}
\end{algorithm}
\vskip -0.05in
It is worth noting that out-of-core least squares or ridge regression scales extremely well with the number of output dimensions $c$, which can be used to solve one-against-all classification problems with $c$ classes. In the out-of-core case, $\bfZ^T \bfy$ will be computed in $O(nDc)$ time along with the Hessian in Algorithm \ref{alg:simplepca} or \ref{alg:hardpca}. After the inverse of Hessian is obtained, only a matrix-vector multiplication costing $O(D^2 c)$ is needed to obtain all the solutions, without any dependency on $n$. Thus the total time of this approach with $c$ classes is $O(nDc + D^2 c)$ which scales very nicely with $c$. Especially compared with other algorithms that need to perform the full training procedure on each class. Although the $L_2$ loss is not optimal for classification, in large-scale problems (e.g. ImageNet) with $1,000 - 10,000$ classes, the out-of-core ridge regression can still be used to generate a fairly good baseline result quickly.

\section{Experiments}
Our experiments are conducted on two challenging datasets: PASCAL VOC 2010 \cite{VOC2010Workshop} and ImageNet \cite{ImageNet} ILSVRC 2010 (http://www.image-net.org/challenges/LSVRC/2010/). These challenging benchmarks reveal the subtle performance differences among approximation methods, which would otherwise be difficult to observe in simple datasets. We conduct most experiments on the medium-scale PASCAL VOC data in order to compare against exact kernel methods. For this dataset, we use exclusively the \texttt{train} and \texttt{val} datasets, which have 964 images and around 2100 objects each. Classification results are also shown on the ImageNet dataset to demonstrate the efficiency of our kernel approximations. The experiments are conducted using an Intel Xeon E5520 2.27GHz with 8 cores and 24GB memory. The algorithm \ref{alg:cheb} is parallelized using OpenMP to take advantage of all cores.
\subsection{Comparing Approximations}
To test different approximations, we consider a small sample from the PASCAL VOC segmentation dataset. For training, we use image segments (obtained using the constrained parametric min-cuts algorithm, CPMC \cite{Carreira2012}) that best match each ground truth segment in terms of overlap (subsequently called best-matching segments) in the \texttt{train} set, plus the ground truth segments. The best-matching segments in the \texttt{val} set are used as test. This creates a medium-scale problem with 5100 training and 964 test segments.

The approximations tested in experiments are \texttt{Chebyshev}, \texttt{VZ} \cite{Vedaldi2012}, \texttt{Direct}.
For reference, we also report classification results for the $\chi^2$ kernel without exponentiating as \texttt{Chi2}, as well as the skewed $\chi^2$ kernel proposed in \cite{li_dagm10} as \texttt{Chi2-Skewed}. Due to the Monte Carlo approximation, different random seeds can lead to quite significant performance variations. Therefore the experiments are all averaged over 50 trials on different random seeds. Within each trial, the same random seeds are used for all methods. For \texttt{PCA-Chebyshev}, the initial sampling is done using three times the final approximating dimensions, and PCA is performed to reduce the dimensionality to the same level as the other two methods. We test the classification performance of these kernels with two different types of features: a bag of SIFT words (BOW) feature of 300 dimensions, and a histogram of gradient (HOG) feature of 1700 dimensions. The classification is done via a linear SVM using the LIBSVM library (empirically we found the LIBLINEAR library produced worse results than LIBSVM in this context with dense features). The $C$ parameter in LIBSVM is validated to 50, the kernel to be approximated is exp-$\chi^2$, with $\beta = 1.5$. For \texttt{VZ}, the period parameter is set to the optimal one specified in \cite{Vedaldi2012}. For each kernel, $5$ dimensions are used to approximate the $\chi^2$ distance in each dimension, which represents a common use case.

\begin{figure}
\begin{tabular}{ll}
\hskip -0.12in \includegraphics[width=120pt]{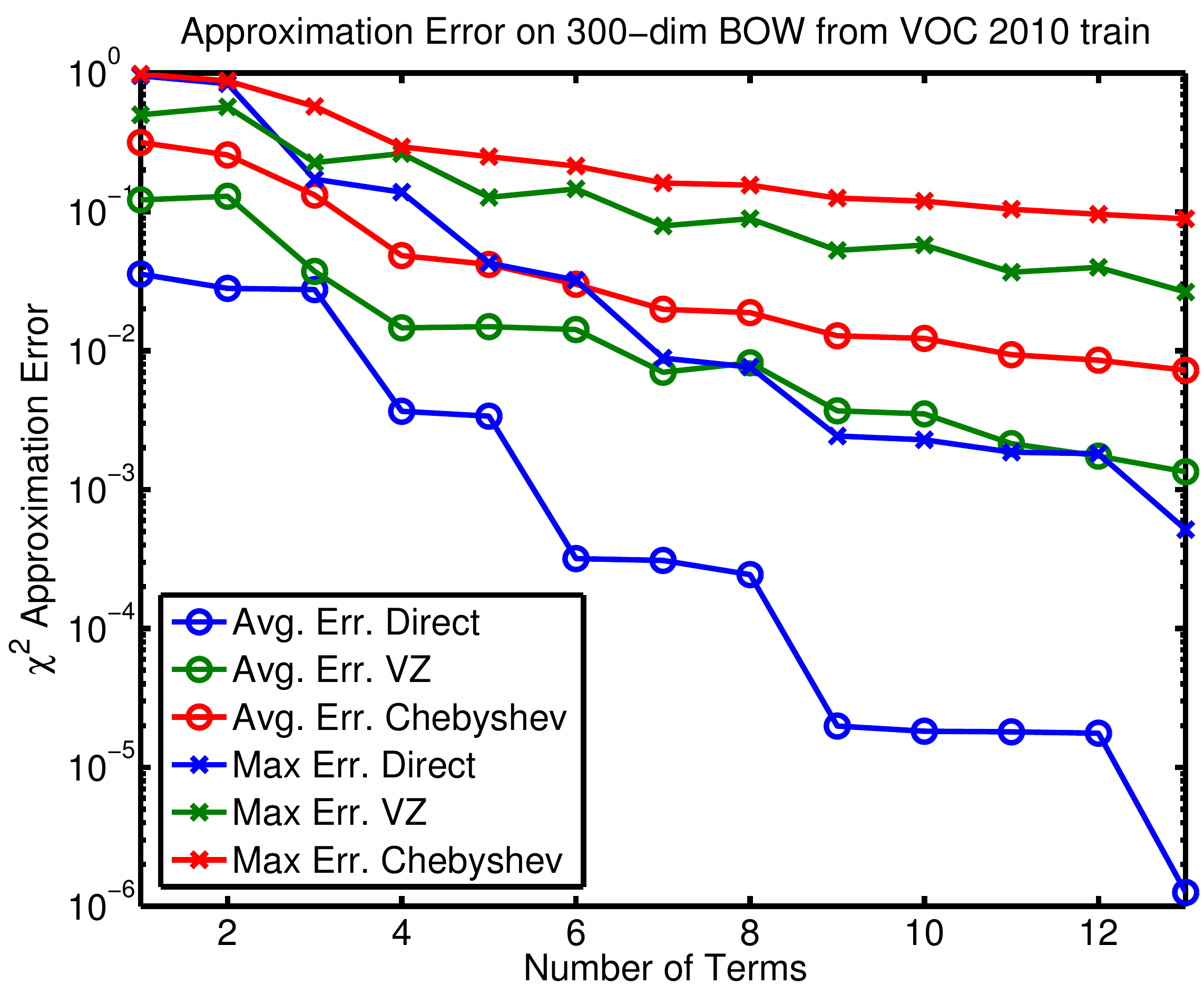} & \hskip -0.12in \includegraphics[width=120pt]{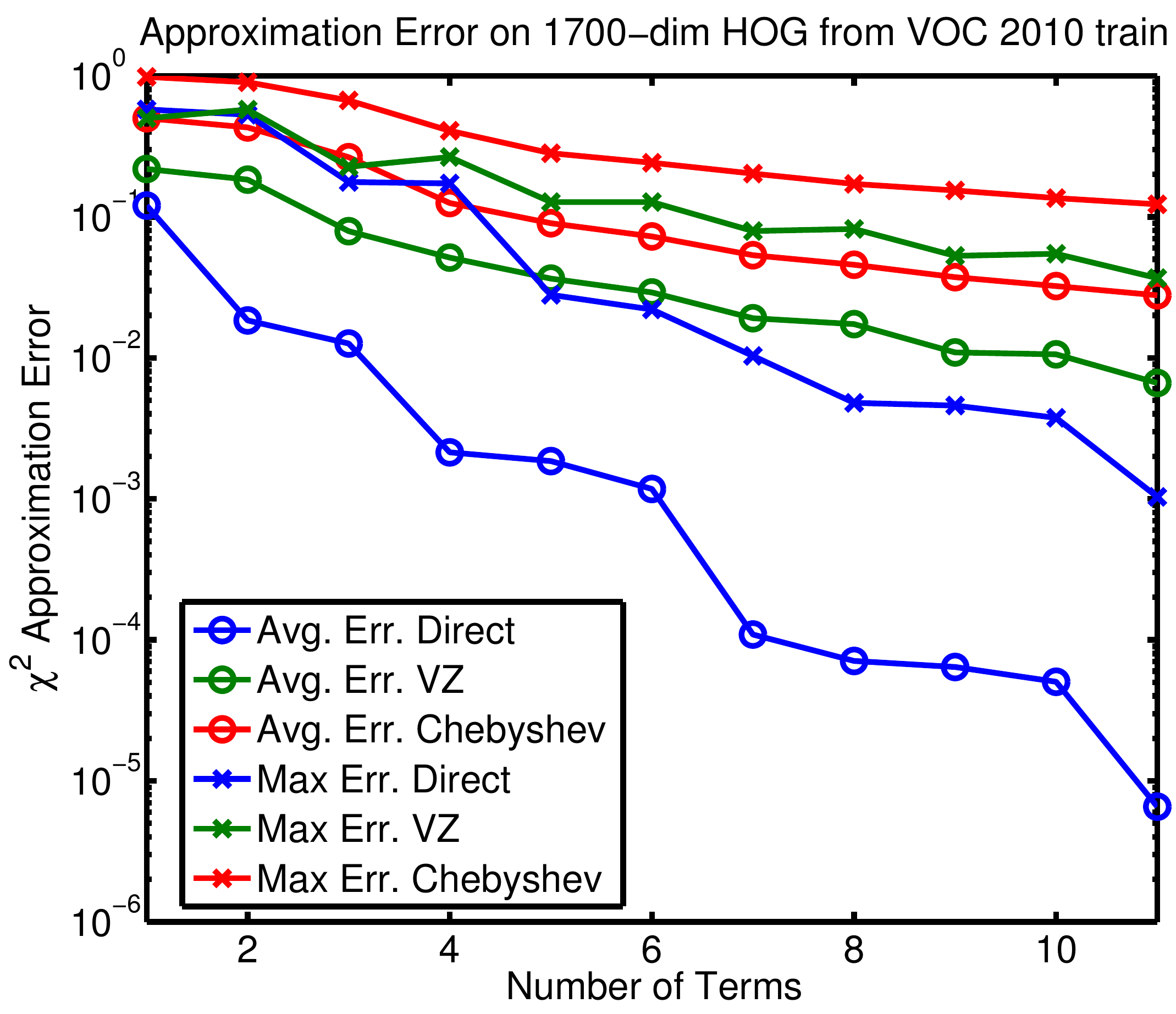}
\end{tabular}
\caption{Comparisons on various approximations to the $\chi^2$ kernel. It can be seen that the
new direct approximation is converging orders of magnitude faster than previous approaches.}
\end{figure}

\begin{figure}
\begin{tabular}{ll}
\includegraphics[width=110pt]{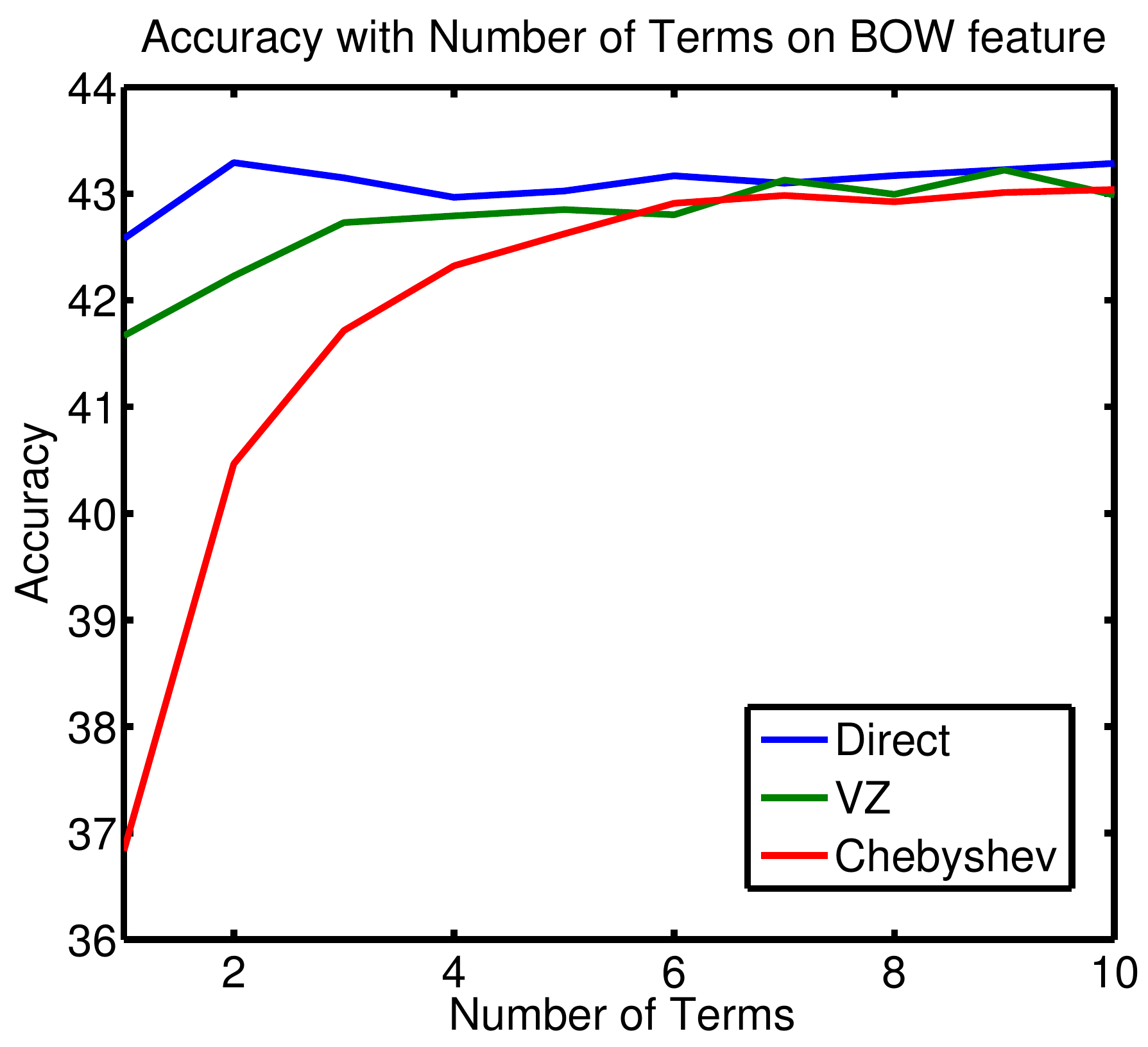} & \includegraphics[width=110pt]{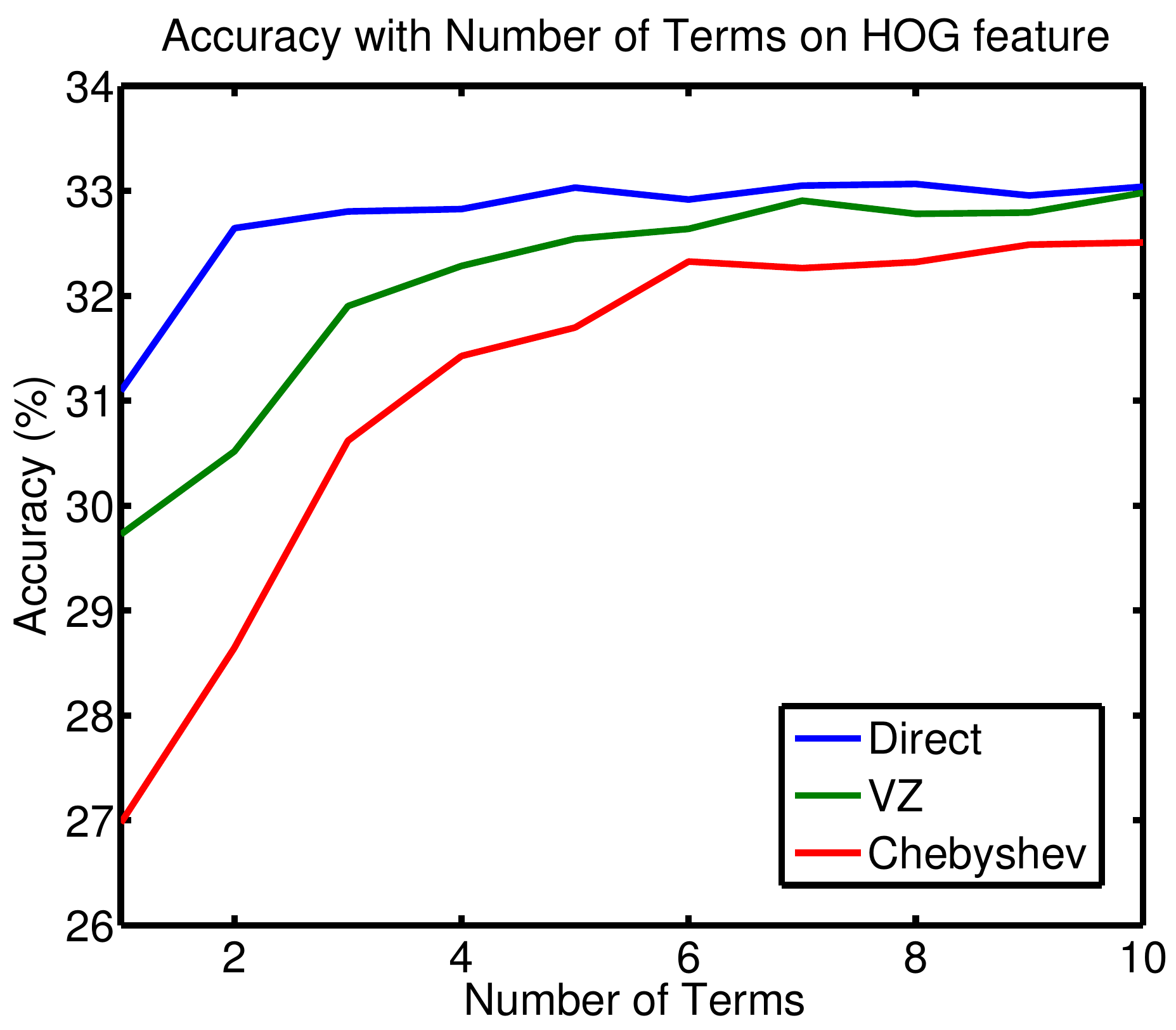}
\end{tabular}
\caption{Effect on the classification accuracy on a 7000-dimensional RF-approximated $\exp-\chi^2$
kernel, using different approximations and various number of dimensions to approximate the $\chi^2$
function. The direct approximation works already very well with 2 terms per original input dimension,
while the other approximation schemes need more terms.}
\end{figure}

\begin{table*}[htb]
\begin{center}
\begin{tabular}{|c|c|c|c|}
\hline
Number of Dimensions & 3000 & 5000 & 7000\\
\hline
Chi2 & 41.91\% & 42.32\% & 42.12\% \\
Chi2-Skewed & 39.82\% $\pm$ 0.73\% & 40.79\% $\pm$ 0.55\% & 40.90\% $\pm$ 0.82\% \\
Chebyshev & $\mathbf{42.03\%} \pm$ 0.80\% & $42.61\% \pm$ 0.65\% & 42.61\% $\pm$ 0.64\% \\
VZ & $\mathbf{42.29\%} \pm$ 0.74\% & $\mathbf{42.82\%} \pm$ 0.63\% & $\mathbf{43.00\%} \pm$ 0.57 \% \\
Direct & $\mathbf{42.09\%} \pm$ 0.79\% & $\mathbf{42.88\%} \pm$ 0.63\% & $\mathbf{43.21}\% \pm$ 0.63\% \\
\hline
\hline
PCA-Chebyshev & $\mathbf{42.80\%} \pm$ 0.74\% & $\mathbf{43.25\%} \pm$ 0.55\% & $\mathbf{43.42\%} \pm$ 0.42 \% \\
PCA-VZ & & & \\
PCA-Direct & $\mathbf{43.16\%} \pm$ 0.55\% & $\mathbf{43.31\%} \pm$ 0.53\% & $\mathbf{43.53\%} \pm$ 0.71 \% \\
\hline
Exact exp-$\chi^2$ & \multicolumn{3}{|c|}{44.19\%} \\
\hline
\end{tabular}
\caption{Classification accuracy of exp-$\chi^2$ kernel when the $\chi^2$ function is estimated with different approximations, on a BOW-SIFT descriptor. Results for the \texttt{Chi2} and \texttt{Chi2-Skewed} kernels are also shown for reference.} \label{tbl:cheb_bow}
\end{center}
\end{table*}

\begin{table*}[htb]
\begin{center}
\begin{tabular}{|c|c|c|c|}
\hline
Number of Dimensions & 3000 & 5000 & 7000\\
\hline
Chi2 & 29.15\% & 30.50\% & 31.22\% \\
Chi2-Skewed & 30.08\% $\pm$ 0.74\% & 30.37 \% $\pm$ 0.63\% & 30.51 \% $\pm$ 0.35 \% \\
Chebyshev & 30.86\% $\pm$ 0.78\% & 31.53\% $\pm$ 0.66\% & 31.90\% $\pm$ 0.70\% \\
VZ & 31.32\% $\pm$ 0.90\% & 32.07 \% $\pm$ 0.83\% & 32.36\% $\pm$ 0.62\% \\
Direct & $\mathbf{31.71\%} \pm$ 0.92\% & $\mathbf{32.72\%} \pm$ 0.73\% & $\mathbf{32.94\%} \pm$ 0.66\% \\
\hline
PCA-Chebyshev & $32.59\% \pm$ 0.77\% & 33.11\% $\pm$ 0.57\% & 33.22\% $\pm$ 0.54\% \\
PCA-VZ & $\mathbf{32.94\%} \pm$ 0.67\% & 33.41\% $\pm$ 0.54\% & $\mathbf{33.45\%} \pm$ 0.59\% \\
PCA-Direct & $\mathbf{32.92\%} \pm$ 0.66\% & $\mathbf{33.68\%} \pm$ 0.57\% & $\mathbf{33.63\%} \pm$ 0.67\% \\
\hline
Exact exp-$\chi^2$ & \multicolumn{3}{|c|}{34.34\%} \\
\hline
\end{tabular}
\caption{Classification accuracy of exp-$\chi^2$ kernel when the $\chi^2$ function is approximated with different approximations, on a HOG descriptor. Results for the \texttt{Chi2} and \texttt{Chi2-Skewed} kernels are also shown for a reference.}
\label{tbl:cheb_hog}
\end{center}
\vskip -0.15in
\end{table*}

\subsection{Results for Multiple Kernels on the PASCAL VOC Segmentation Challenge}
In this section, we consider the semantic segmentation task from PASCAL VOC, where we need to both recognize objects in images, and generate pixel-wise segmentations for these objects. Ground truth segments of objects paired with their category labels are available for training.

A recent state-of-the-art approach trains a scoring function for each class on many putative figure-ground segmentation hypotheses, obtained using CPMC \cite{Carreira2012}. This creates a large-scale learning task even if the original image database has moderate size:
with $100$ segments in each image, training for $964$ images creates a learning problem with around $100,000$ training examples. This input scale is still tractable for exact kernel approaches so that we can directly compare against them.

Two experiments are conducted using multiple kernel approximations for the exp-$\chi^2$ kernels. We use 
7 different image descriptors, which include 3 HOGs at different scales, BOW on SIFT for the foreground and background, and BOW on color SIFT for the foreground and background \cite{Li2010,Carreira2012}. The VOC segmentation measure is used to compare the different approaches. This measure is the average of pixel-wise average precision on the 20 classes plus background. To avoid distraction and for a fair comparison, the post-processing step \cite{Carreira2012} is not performed and the result is obtained by only reporting one segment with the highest score in each image. The method used for nonlinear estimation is one-against-all support vector regression (SVR) as in \cite{Li2010}, and the method for linear estimation is one-against-all ridge regression. The latter is used since fast solutions for linear SVR problems are not yet available for out-of-core dense features. We avoided stochastic gradient methods (e.g., \cite{Lin2011}) since these are difficult to tune to convergence, and such effects can potentially bias the results. We average over 5 trials of different random seeds.

\begin{table}[htb]
\begin{center}
\begin{tabular}{|c|c|}
\hline
Method & Performance \\
\hline
Chebyshev & $26.25\% \pm 0.41\%$ \\
VZ & $25.50\% \pm 0.54\%$ \\
Direct & \\
PCA-Direct & \\
PCA-Chebyshev & $27.57\% \pm 0.44\%$ \\
PCA-training-Chebyshev & $26.95\% \pm 0.35\%$ \\
Nystr{\"o}m & $27.55\% \pm 0.49\% $ \\
\hline
Kernel SVR & $30.47\%$ \\
\hline
\end{tabular}
\caption{VOC Segmentation Performance on the \texttt{val} set, measured by pixel AP with one segment output per image (no post-processing), and averaged over 5 random trials. The upper part of the table shows results on only BOW-SIFT features extracted on foreground and background. The lower part shows results based on combining 7 different descriptors.}
\label{tbl:voc}
\end{center}
\vskip -0.15in
\end{table}

\subsection{Results on ImageNet}
The ImageNet ILSVRC 2010 is a challenging classification dataset where 1 million images have to be separated into
1,000 different categories. Here we only show experiments performed
using the original BOW feature provided by the authors. Our goal is primarily to compare among different approximations,
hence we did not generate multiple image descriptors or a spatial pyramid, which are compatible with our framework
and could improve the results significantly.
Since regression is used, the resulting scores are not well-calibrated across categories. Therefore we perform a
calibration of the output scores to make the 500th highest score of each class the same.

\begin{table*}[htb]
\begin{center}
\begin{tabular}{|c|c|c|c|}
\hline
Number of Dimensions & 3000 & 5000 & 7000\\
\hline
Chebyshev & 16.30\% $\pm$ 0.04\% & 17.11\% $\pm$ 0.04\% & 17.63\% $\pm$ 0.09\% \\
PCA-Chebyshev & $\mathbf{16.66\%} \pm$ 0.08\% & $\mathbf{17.85\%} \pm$ 0.08\% & $\mathbf{18.65\%} \pm$ 0.10 \% \\
VZ & 16.10\% $\pm$ 0.04\% & 16.95 \% $\pm$ 0.08\% & 17.48\% $\pm$ 0.09\% \\
Direct & & & \\
Nystr\"{o}m & & & \\
\hline
Linear & \multicolumn{3}{|c|}{11.6\% (\cite{Deng2010})} \\
\hline
\end{tabular}
\caption{Performance of linear classifiers as well as non-linear approximation methods on ImageNet ILSVRC 2010 data. Notice the significant boost provided by the non-linear approximations (exact non-linear calculations are intractable in this large scale setting).} \label{tbl:imagenet}
\end{center}
\vskip -0.25in
\end{table*}

\section{Conclusion}
The conclusion goes here. The conclusion goes here.The conclusion goes here.The conclusion goes here.The conclusion goes here.The conclusion goes here.The conclusion goes here.The conclusion goes here.The conclusion goes here.The conclusion goes here.The conclusion goes here.The conclusion goes here.The conclusion goes here.The conclusion goes here.The conclusion goes here.The conclusion goes here.The conclusion goes here.The conclusion goes here.The conclusion goes here.The conclusion goes here. The conclusion goes here.The conclusion goes here.The conclusion goes here.The conclusion goes here.

\appendix[Derivation of the Recurrence Relation]
In this appendix we derive the recurrence relations (\ref{eqn:recur}). First we list some useful properties.

\begin{lemma}
Let $u(z) = \frac{1}{\pi} \log \tan(\frac{z}{2}) \log x$, then,
\begin{enumerate}[(1)]

\item $u(\pi) = 0; \cos(u(\frac{\pi}{2})) = 1; \sin(u(\frac{\pi}{2})) = 0$
\item $u'(z) = \frac{\log x}{\pi} \frac{1}{\sin(z)}, \frac{1}{u'(z)} = \frac{ \pi}{\log x} \sin(z)$
\item $\frac{1}{u'(0)} = 0, \frac{1}{u'(\pi)} = 0$
\end{enumerate}
\end{lemma}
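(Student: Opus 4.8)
The plan is to verify the three items in the order (2), (3), (1), since item (2) is the core computation and the remaining items either follow from it or reduce to point evaluations.

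First I would establish item (2) by differentiating $u(z) = \frac{\log x}{\pi}\log\tan(\frac{z}{2})$ with the chain rule. Writing $\frac{d}{dz}\log\tan(\frac{z}{2}) = \frac{1}{\tan(z/2)}\cdot\frac{1}{2}\sec^2(\frac{z}{2}) = \frac{1}{2\sin(z/2)\cos(z/2)}$ and invoking the double-angle identity $\sin z = 2\sin(\frac{z}{2})\cos(\frac{z}{2})$ collapses the expression to $\frac{1}{\sin z}$. Hence $u'(z) = \frac{\log x}{\pi}\frac{1}{\sin z}$, and taking reciprocals gives $\frac{1}{u'(z)} = \frac{\pi}{\log x}\sin z$, exactly as claimed.

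Item (3) is then immediate: substituting $z=0$ and $z=\pi$ into the reciprocal form $\frac{1}{u'(z)} = \frac{\pi}{\log x}\sin z$ and using $\sin 0 = \sin \pi = 0$ yields $\frac{1}{u'(0)} = \frac{1}{u'(\pi)} = 0$. I would emphasize here that $u'$ itself diverges at these two endpoints, so it is essential to work with the reciprocal $1/u'$; this is precisely the quantity that surfaces in the integration-by-parts boundary terms later, and phrasing it this way makes the vanishing at the endpoints transparent rather than singular.

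Item (1) reduces to evaluating $u$ and the trigonometric functions of $u$ at a single point. At $z = \frac{\pi}{2}$ we have $\tan(\frac{\pi}{4}) = 1$, so $\log\tan(\frac{\pi}{4}) = 0$ and therefore $u(\frac{\pi}{2}) = 0$, whence $\cos(u(\frac{\pi}{2})) = 1$ and $\sin(u(\frac{\pi}{2})) = 0$ follow at once. The only place requiring care — and the main obstacle, such as it is — is the literal reading $u(\pi) = 0$: at $z = \pi$ the argument $\tan(\frac{\pi}{2})$ diverges, so $u(\pi)$ is not finite, and the companion identities $\cos(u(\cdot)) = 1$, $\sin(u(\cdot)) = 0$ are consistent only with the argument $\frac{\pi}{2}$. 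I would therefore treat the intended claim as $u(\frac{\pi}{2}) = 0$ and prove that; I expect this to amount to flagging a typographical slip rather than confronting a genuine mathematical difficulty, since every computation involved is elementary.
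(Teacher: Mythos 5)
Your computations are correct, and in fact the paper offers no proof of this lemma at all: it is stated bare in the appendix as a list of ``useful properties'' before the recurrence derivation, so your elementary verification supplies exactly what the paper leaves implicit. Item (2) via the chain rule and the double-angle identity $\sin z = 2\sin(\frac{z}{2})\cos(\frac{z}{2})$ is the only computation with any content, and you do it correctly; item (3) follows by evaluation of $\frac{1}{u'(z)} = \frac{\pi}{\log x}\sin z$ at the endpoints, and your remark that one must work with the reciprocal rather than $u'$ itself (which diverges there) is apt, since these are precisely the boundary terms that vanish in the paper's integration-by-parts derivation of the recurrences. Your flag on item (1) is also justified: as literally written, $u(\pi)$ diverges, since $\tan(\frac{\pi}{2}) = \infty$ forces $\log\tan(\frac{z}{2}) \to +\infty$ as $z \to \pi^{-}$, so $u(\pi) = 0$ cannot hold for $x \neq 1$. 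The intended claim is evidently $u(\frac{\pi}{2}) = \frac{1}{\pi}\log\tan(\frac{\pi}{4})\log x = 0$, which is consistent with the two companion identities $\cos(u(\frac{\pi}{2})) = 1$ and $\sin(u(\frac{\pi}{2})) = 0$ in the same item, and with the appendix derivation, where after exploiting symmetry all integrals run over $[0, \frac{\pi}{2}]$ and the evaluation point $\frac{\pi}{2}$ is the one that matters. Treating this as a typographical slip and proving the corrected statement, as you propose, is the right call; there is no gap in your argument.
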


First of all we concern $a_0$ and $b_0$. With a symbolic integration software (e.g. Mathematica) one can compute $a_0 = 2 \mathrm{sech}(\frac{\log x}{2}) = \frac{4 \sqrt{x}}{x+1}$ and
$b_0 = 0$. For the rest of the series, we can immediately observe that for $q$ even, $b_q = 0$ because $\sin(u(z))$ is antisymmetric at $\frac{\pi}{2}$ and $\cos(qz)$ is symmetric for even $q$ and antisymmetric for odd $q$. Same argument gets us for $q$ odd, $a_q = 0$. Therefore we only need to solve the coefficients $b_q$
with odd $q$, and $a_q$ with even $q$. Therefore, we start with the integration:
{\small
\begin{IEEEeqnarray*}{cl}
&\frac{\pi}{4} b_k(x)\\
=&\int_0^\frac{\pi}{2} \sin(\frac{1}{\pi} \log \tan(\frac{z}{2}) \log x) \cos(kz) dz \\
=&- \int_0^\frac{\pi}{2} \cos(kz) \frac{1}{u'(z)} d(\cos(u(z))) \\
=&\int_0^\frac{\pi}{2} \cos(u(z)) d(\cos(kz) \frac{\pi}{\log x} \sin(z)) \\
=&\frac{\pi}{\log x} \int_0^\frac{\pi}{2} \cos(u(z)) (-k\sin(kz) \sin(z) + \cos(kz) \cos(z)) dz \\
=&\frac{\pi}{\log x} \int_0^\frac{\pi}{2} \cos(u(z)) (\cos((k+1)z) - (k-1) \sin(kz) \sin(z)) dz \\
=&\frac{\pi}{\log x} \int_0^\frac{\pi}{2} \cos(u(z)) (\cos((k+1)z) \\
&- \frac{k-1}{2} (\cos((k-1)z) - \cos((k+1)z))) dz \\
=& \frac{\pi}{4} \frac{\pi}{\log x} (\frac{k+1}{2} a_{k+1}(x) - \frac{k-1}{2} a_{k-1}(x))
\end{IEEEeqnarray*}
}
where we have used integration-by-parts followed by trigonometric identities. The same trick applies to the $a_k$ series with even coefficients:
{\small
\begin{IEEEeqnarray*}{cl}
&\frac{\pi}{4} a_k(x) \\
= & \int_0^\frac{\pi}{2} \cos(\frac{1}{\pi} \log \tan(\frac{z}{2}) \log x) \cos(kz) dz \\
= & \int_0^\frac{\pi}{2} \cos(kz) \frac{1}{u'(z)} d(\sin(u(z))) \\
= & - \int_0^\frac{\pi}{2} \sin(u(z)) d(\cos(kz) \frac{\pi}{\log x} \sin(z)) \\
= & - \frac{\pi}{\log x} \int_0^\frac{\pi}{2} \sin(u(z)) (-k\sin(kz) \sin(z) + \cos(kz) \cos(z)) dz \\
= & - \frac{\pi}{\log x} \int_0^\frac{\pi}{2} \sin(u(z)) (\cos((k+1)z) - (k-1) \sin(kz) \sin(z)) dz \\
=& - \frac{\pi}{\log x} \int_0^\frac{\pi}{2} \sin(u(z)) (\cos((k+1)z) \\
&- \frac{k-1}{2} (\cos((k-1)z) - \cos((k+1)z))) dz \\
= & - \frac{\pi}{4} \frac{\pi}{\log x} (\frac{k+1}{2} b_{k+1}(x) - \frac{k-1}{2} b_{k-1}(x))
\end{IEEEeqnarray*}
}
For $k=0$ it's slightly different as:
{\small
\begin{IEEEeqnarray*}{rcl}
\frac{\pi}{4} a_0(x) & = & \int_0^\frac{\pi}{2} \cos(u(z)) dz = \int_0^\frac{\pi}{2} \frac{1}{u'(z)} d(\sin(u(z))) \\
& = & - \frac{\pi}{\log x} \int_0^\frac{\pi}{2} \sin(u(z)) \cos(z) dz \\
& = & - \frac{\pi}{4} \frac{\pi}{\log x} b_{1}(x)
\end{IEEEeqnarray*}
}

%

%

\ifCLASSOPTIONcompsoc
  \section*{Acknowledgments}
\else
  \section*{Acknowledgment}
\fi

The authors would like to thank...The authors would like to thank...The authors would like to thank...The authors would like to thank...The authors would like to thank...The authors would like to thank...The authors would like to thank...The authors would like to thank...The authors would like to thank...The authors would like to thank...The authors would like to thank...The authors would like to thank...The authors would like to thank...

\ifCLASSOPTIONcaptionsoff
  \newpage
\fi


\bibliographystyle{IEEEtran}
\bibliography{dependent_reference}

%

\begin{IEEEbiography}{Michael Shell}
Biography text here.
\end{IEEEbiography}

\begin{IEEEbiographynophoto}{John Doe}
Biography text here.Biography text here.Biography text here.Biography text here.Biography text here.Biography text here.Biography text here.Biography text here.Biography text here.Biography text here.Biography text here.Biography text here.Biography text here.Biography text here.Biography text here.Biography text here.Biography text here.Biography text here.Biography text here.Biography text here.Biography text here.Biography text here.Biography text here.Biography text here.Biography text here.Biography text here.Biography text here.Biography text here.Biography text here.Biography text here.Biography text here.Biography text here.
\end{IEEEbiographynophoto}


\begin{IEEEbiographynophoto}{Jane Doe}
Biography text here.Biography text here.Biography text here.Biography text here.Biography text here.Biography text here.Biography text here.Biography text here.Biography text here.Biography text here.Biography text here.Biography text here.Biography text here.Biography text here.Biography text here.Biography text here.Biography text here.Biography text here.Biography text here.Biography text here.Biography text here.Biography text here.Biography text here.Biography text here.Biography text here.Biography text here.Biography text here.Biography text here.
\end{IEEEbiographynophoto}




\end{document}